\def\eqref#1{equation~\ref{#1}}
\def\1{\bm{1}}
\DeclareMathAlphabet{\mathsfit}{\encodingdefault}{\sfdefault}{m}{sl}
\SetMathAlphabet{\mathsfit}{bold}{\encodingdefault}{\sfdefault}{bx}{n}
\newcommand{\E}{\mathbb{E}}
\newtheorem{theorem}{Theorem}
\newtheorem{proposition}[theorem]{Proposition}
\newtheorem{Definition}[theorem]{Definition}
\newtheorem{Theorem}[theorem]{Theorem}
\title{Bridging the Fairness Divide: Achieving Group and Individual Fairness in Graph Neural Networks}
\author[1]{Duna Zhan}
\author[2]{Dongliang Guo}
\author[1]{Pengsheng Ji}
\author[2]{Sheng Li}
\affil[1]{University of Georgia}
\affil[2]{University of Virginia}
\affil[ ]{{dunaz@uga.edu, dongliang.guo@virginia.edu, psji@uga.edu, shengli@virginia.edu}}
\begin{document}
\maketitle

\begin{abstract}
Graph neural networks (GNNs) have emerged as a powerful tool for analyzing and learning from complex data structured as graphs, demonstrating remarkable effectiveness in various applications, such as social network analysis, recommendation systems, and drug discovery. However, despite their impressive performance, the fairness problem has increasingly gained attention as a crucial aspect to consider. Existing research in graph learning focuses on either group fairness or individual fairness. However, since each concept provides unique insights into fairness from distinct perspectives, integrating them into a fair graph neural network system is crucial. To the best of our knowledge, no study has yet to comprehensively tackle both individual and group fairness simultaneously. In this paper, we propose a new concept of individual fairness within groups and a novel framework named Fairness for Group and Individual (FairGI), which considers both group fairness and individual fairness within groups in the context of graph learning. FairGI employs the similarity matrix of individuals to achieve individual fairness within groups, while leveraging adversarial learning to address group fairness in terms of both Equal Opportunity and Statistical Parity. The experimental results demonstrate that our approach not only outperforms other state-of-the-art models in terms of group fairness and individual fairness within groups, but also exhibits excellent performance in population-level individual fairness, while maintaining comparable prediction accuracy.
\end{abstract}

\section{Introduction}
\label{sec:intro}

Graph-based data provides a natural way to present complex relationships and structures in the real world and has wide applications in various domains, such as social networks and recommendations. Graph Neural Networks (GNNs) have emerged as powerful tools for graph-structured data, including Graph Convolutional Networks (GCNs) \citep{kipf2016semi}, Graph Attention Neural Networks (GAT) \citep{wang2019kgat}, and Graphsage \citep{hamilton2017inductive}. Despite the impressive performance of these models, a notable limitation is that GNNs can potentially be biased and exhibit unfair prediction when the training graph is biased or contains sensitive information. 

Existing work on fair graph learning mainly focuses on group, individual, and counterfactual fairness. Models emphasizing group fairness concentrate on mitigating bias at the demographic group level and guaranteeing fairness for protected groups such as FairGNN \citep{dai2021say}.  Models prioritizing individual fairness aim to ensure fairness at the individual level, such as InFoRM \citep{kang2020inform}. Graph learning with counterfactual fairness assesses fairness by assuring the fairness of predictions for each individual compared to counterfactual scenarios, like GEAR \citep{ma2022learning}.

When focusing on a single type of fairness, like individual or group fairness, the existing fair graph learning models demonstrate effectiveness in mitigating bias while achieving comparable accuracy. However, group and individual fairness possess inherent limitations, and integrating them is not a trivial task. Group fairness measurements such as Statistical Parity (SP) \citep{dwork2012fairness} and Equal Opportunity (EO) \citep{hardt2016equality} only consider fairness at the demographic level, neglecting individual-level fairness. Conversely, individual fairness prioritizes equity on a personal scale but falls short of ensuring fairness across broader demographic groups.

For instance, Fig. \ref{fig1} presents an example of an admissions model where gender is identified as the sensitive attribute: females are represented by red, and males by blue. In Fig. \ref{fig1}(a), the machine learning model ensures group fairness, signifying that it upholds fairness at the demographic level, but overlooks individual fairness within these groups. In contrast, the model illustrated in Fig. \ref{fig1}(b) maintains fairness at both the demographic and individual levels. It adheres to the principle that similar inputs should yield similar outputs, a criterion essential for individual fairness, which is achieved through the Lipschitz condition. Consequently, this machine learning model not only ensures fairness across groups but also assures that candidates with analogous attributes, like grades and experience, receive comparable outcomes. Such consistency is crucial for preserving equity throughout the admissions process. 
To the best of our knowledge, this paper is the first study that simultaneously achieves both group and individual fairness within groups. Our work addresses and mitigates unfairness at both the group and individual levels, as exemplified in Fig. \ref{fig1}(b).
\begin{figure}[t]
    \centering
    \vspace{-5mm}
    \subfloat[\centering Machine learning model with group fairness]{{\includegraphics[scale=0.2]{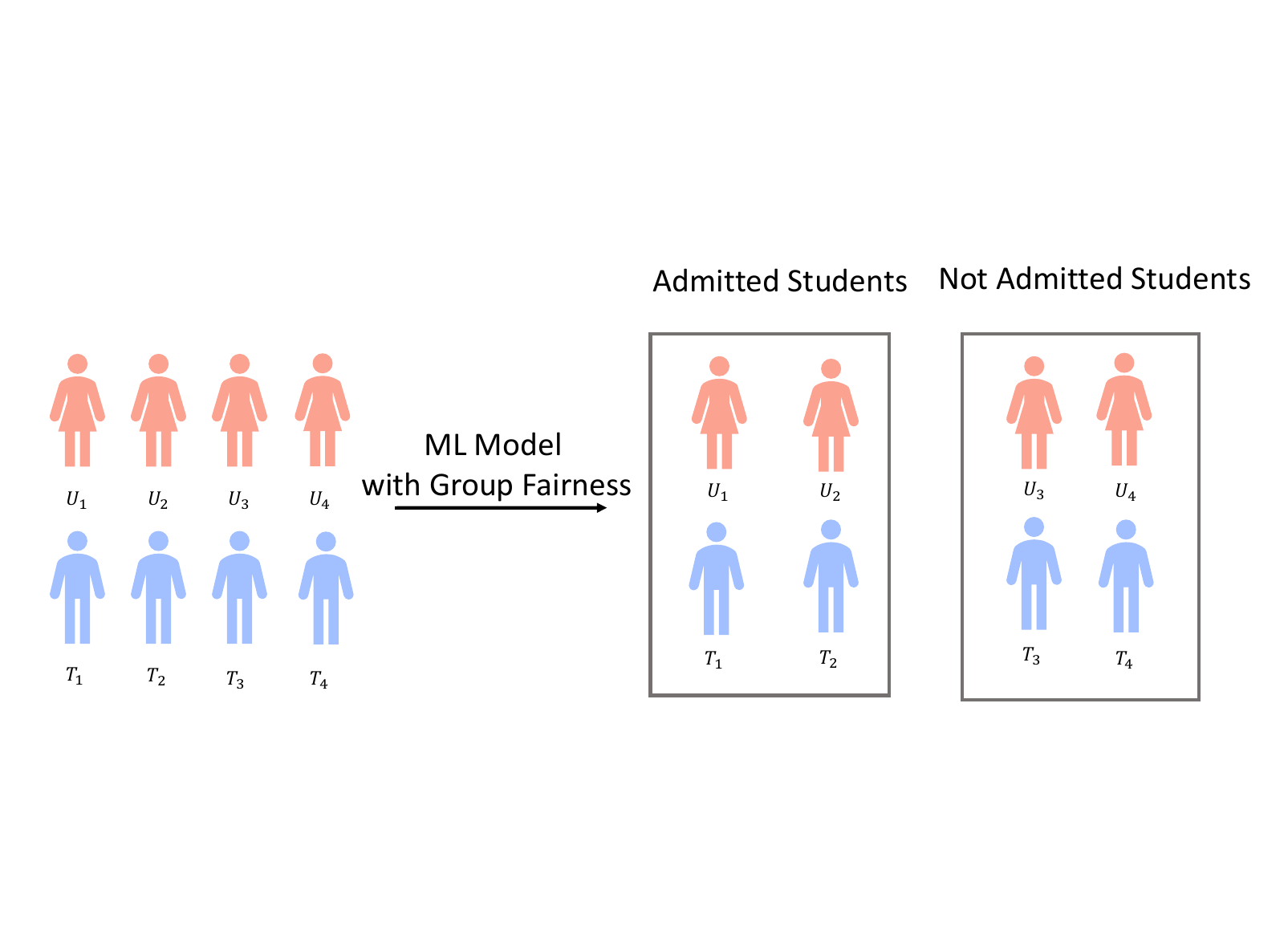} }}%
    \qquad
    \subfloat[\centering Machine learning model with group fairness and individual fairness]{{\includegraphics[scale=0.18]{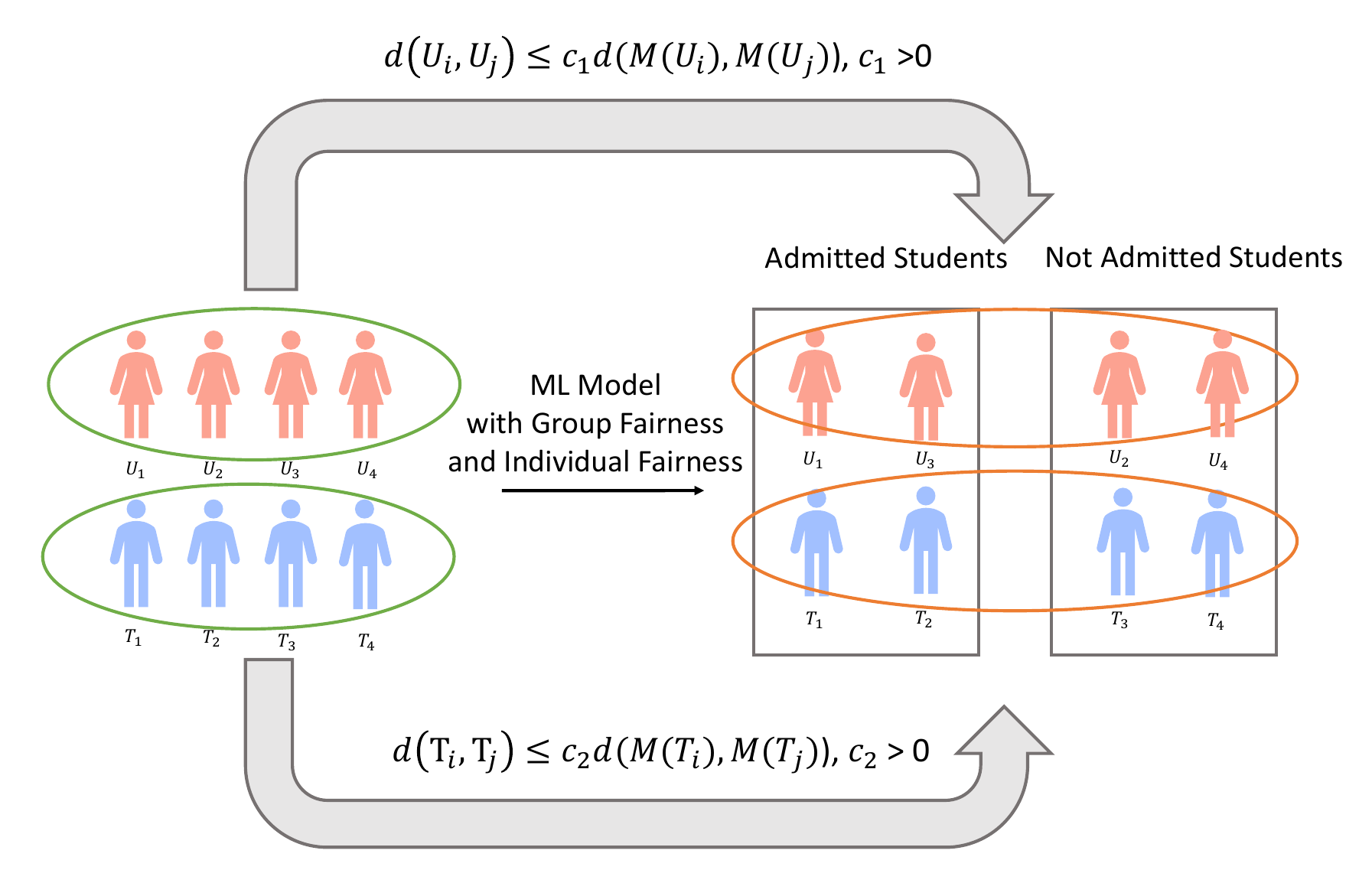} }}%
    \caption{A toy example for student admission model with gender as the sensitive attribute. The red color represents female students, which is also the protected group. The blue color denotes male students. Fig. \ref{fig1} (a) shows the machine learning model can only guarantee the group fairness. Fig. \ref{fig1} (b) shows the machine learning model can guarantee both group and individual fairness.}%
    \label{fig1}%
    \vspace{-5mm}
\end{figure}

To address the abovementioned problems, we design an innovative definition to quantify individual fairness within groups. Further, we develop a novel framework named Fairness for Group and Individual (FairGI), designed to address concerns related to group and individual fairness. Our goal is to address 
two major challenges: (1) how to resolve the conflicts between group fairness and individual fairness and (2) how to ensure both group fairness and individual fairness within groups in graph learning. For the first challenge, we define a new definition of individual fairness within groups. This has been designed to prevent discrepancies between group fairness and individual fairness across different groups.  For the second challenge, we develop a framework FairGI to simultaneously achieve group fairness and individual fairness within groups and maintain comparable accuracy for the model prediction. 

The primary contributions of this paper can be summarized as follows: (1) We introduce a novel problem concerning the achievement of both group fairness and individual fairness within groups in graph learning. To the best of our understanding, this is the first study of this unique issue; (2) We propose a new metric to measure individual fairness within groups for graphs; (3) We propose an innovative framework FairGI, to ensure group fairness and individual fairness within groups in graph learning and maintaining comparable model prediction performance; (4) Comprehensive experiments on various real-world datasets demonstrate the effectiveness of our framework in eliminating both group and individual fairness and maintaining comparable prediction performance. Moreover, the experiments show that even though we only constrain individual fairness within groups, our model achieves the best population individual fairness compared to state-of-the-art models.

\section{Related Work}
\label{sec:rw}
\subsection{Fairness in Machine Learning}
With the advances in machine learning, the applications of machine learning models are widely used in our daily life, including financial services \citep{leo2019machine}, hiring decisions \citep{chalfin2016productivity}, precision medicine \citep{maceachern2021machine}, and so on.  Machine learning models can also be applied in sensitive situations and make crucial decisions for different people. However, recent research shows that the current machine learning models may suffer from discrimination \citep{dressel2018accuracy}. Thus, considering fairness in machine learning models is becoming an important topic when we apply the models to make decisions in our daily life.



The algorithms for fairness in machine learning can be divided into three groups: pre-processing, in-processing, and after-processing. The pre-processing methods mainly focus on adjusting the input data to reduce the unfairness before training the model. The data preprocessing techniques include: re-weighting the sensitive groups to avoid discrimination \citep{kamiran2012data} and re-sampling the data distribution \citep{calmon2017optimized}. The in-processing methods integrate fairness constraints directly into the learning algorithm during the training process, such as adversarial debiasing \citep{zhang2018mitigating} and fairness-aware classifier \citep{zafar2017fairness}. Post-processing techniques adjust the output of the machine learning model after training to satisfy fairness constraints, including rejecting option-based classification \citep{kamiran2012decision} and equalized odds post-processing \citep{hardt2016equality}.

\subsection{Fairness in Graph Learning}

GNNs are successfully applied in many areas. However, they have fairness issues because of built-in biases like homophily, uneven distributions, unbalanced labels, and old biases like gender bias. These biases can make disparities worse through unfair predictions. Therefore, ensuring fairness in GNNs is essential for achieving unbiased decision-making and equitable outcomes. Many methods have been suggested to address fairness in learning from graphs, mainly focusing on three types: group fairness, individual fairness, and counterfactual fairness.

Group fairness approaches, such as FairGNN \citep{dai2021say} and FairAC \citep{guo2023fair}, focus on achieving fairness across different groups, ensuring balanced outcomes at the group level. Recent advancements in this category include FairSample \citep{cong2023fairsample}, FairVGNN \citep{wang2022improving}, FatraGNN \citep{li2024graph}, G-Fame \citep{liu2023fair} and NT-FairGNN \citep{dai2022learning}, which all dedicated to maintaining fairness among groups within graph neural networks.

Individual fairness methods, such as InForm \citep{kang2020inform}, PFR \citep{lahoti2019operationalizing}, and IFGNN \citep{xu2023ifgnn}, focus on fairness at the individual level. These approaches ensure that similar entities within the graph are treated equivalently, prioritizing the notion that fairness must be maintained on a person-by-person basis. However, they often overlook the aspects of group fairness. Guide \citep{song2022guide} equalizes the individual fairness level among different groups, but it still focuses on individual fairness and does not consider group fairness.

Counterfactual fairness methods explore hypothetical scenarios where certain attributes or connections are altered to understand their causal impact on fairness. Methods such as GEAR \citep{ma2022learning}, Nifty\citep{agarwal2021towards}, and CAF \citep{guo2023towardsk} aim to achieve model fairness from a causal perspective, ensuring that individual predictions remain fair by contrasting them with both their original scenarios and counterfactual alternatives.


To underline the prevalent limitations of current methods, it is noteworthy that existing approaches tend to focus on a single dimension of fairness, either group fairness, individual fairness, or counterfactual fairness, which restricts their ability to produce truly fair outcomes for both groups and individuals. \cite{dwork2012fairness} emphasize the inherent conflicts between group fairness and individual fairness, illustrating that combining these two concepts of fairness into one model presents significant challenges. 


To address the inherent conflicts between group and individual fairness, we propose a novel metric, "individual fairness within groups," which specifically aims to improve fairness at the individual level within each group. Additionally, we develop a framework named FairGI that optimizes individual fairness within groups by utilizing a node similarity matrix. FairGI also incorporates adversarial learning to achieve group fairness with respect to EO and SP. This innovative framework represents a more refined and comprehensive method for achieving both group and individual fairness in graph learning, encompassing a broader spectrum of fairness considerations.


\section{Preliminary}
\label{sec:pre}
\subsection{Preliminaries for fairness learning in graphs}


\subsubsection{Individual fairness} Individual fairness emphasizes fairness at the individual level, ensuring that individuals with similar inputs are treated consistently and fairly.  We present the definition of individual fairness in graph learning based on the Lipschitz continuity below \citep{kang2020inform}.





\begin{Definition} (Individual Fairness.) Let $\mathcal{G} = (\mathcal{V}, \mathcal{E})$ be a graph with node set $\mathcal{V}$ and edge set $\mathcal{E}$. $f_G$ is the graph learning model. $Z \in R^{n \times n_z}$ is the output matrix of $f_G$, where $n_z$ is the embedding dimension for nodes, and $n = |V|$. $M \in R^{n \times n}$ is the similarity matrix of nodes. The model $f_G$ is  individual fair if its output matrix $Z$  satisfies 
\begin{equation}
L_{If}(Z)  = \frac{\sum_{v_i \in \mathcal{V}}\sum_{v_j \in \mathcal{V}} \Arrowvert \mathbf{z}_i - \mathbf{z}_j\Arrowvert_F^2 M[i, j] }{2} = Tr(Z^TLZ) \leq m\epsilon,
    \label{eq_def2}
\end{equation}
 where $L \in R^{n \times n}$ is the Laplacian matrix of $M$, $\epsilon \in R^+$ is a constant and $m$ is the number of nonzero values in $M$. $\mathbf{z}_i$ is the $i$th row of matrix $Z$ and $M[i,j]$ is the element in the $i$th row and $j$th column of matrix $M$. $L_{If}$ can be viewed as the population individual bias of model $f_G$.
 \label{def2}
\end{Definition}

\subsubsection{Group fairness}  In this paper, we consider two key definitions of group fairness, which are Statistical Parity (SP) \citep{dwork2012fairness} and Equal Opportunity (EO) \citep{hardt2016equality}. 

\begin{Definition} (Statistical Parity.) 
    \begin{equation}
        \Delta SP = P(\hat{y} = 1|s = 0) - P(\hat{y} = 1|s = 1),
    \end{equation}
  where $\hat{y}$ is the predicted label, $y$ is the ground truth of the label, and $s$ is the sensitive attribute. 
\end{Definition}

\begin{Definition} (Equal Opportunity.) 
    \begin{equation}
        \Delta EO = P(\hat{y} = 1| y = 1, s = 0) - P(\hat{y} = 1| y = 1, s = 1),
    \end{equation}
      where $\hat{y}$ is the predicted label, $y$ is the ground truth of the label, and $s$ is the sensitive attribute. 
\end{Definition}

\subsection{Problem formulation and notations}

\subsubsection{Notations} Let $\mathcal{G} = (\mathcal{V}, \mathcal{E}, \mathcal{X})$ be an undirected graph, where $\mathcal{V}$ is the set of nodes and $\mathcal{E}$ is the set of edges. We have $|\mathcal{V}| = n$.  Let $X \in R^{n \times d}$ be the input matrix of node features with $d$ dimensions. In this paper, we assume the dataset contains a single sensitive feature characterized by binary values. Let $s_i$ be the sensitive attribute of the $i$th node in $\mathcal{G}$ and $\mathcal{S} = \{s_1, s_2, ..., s_n \}$. Let $y_i$ be the target label of the $i$th node in $\mathcal{G}$. The sensitive attribute divides the nodes into two groups. Without loss of generation, we name the group with $s = 1, s \in \mathcal{S}$ as protected group and $s = 0, s \in \mathcal{S}$ as unprotected group. Still, our methods can be easily extended to sensitive attributes with multiple values. 

In this work, we address the unfairness issues in the node classification task on graphs. Our method ensures both group fairness and individual fairness within groups while preserving comparable accuracy performance. The fair graph learning problem is defined below.


\subsubsection{Problem} \textit{Let $\mathcal{G} = (\mathcal{V}, \mathcal{E}, \mathcal{X})$ be an undirected graph with sensitive attribute $s \in \mathcal{S}$. Denote $X \in R^{n \times d}$ as the matrix of node features. Let $\mathcal{P}$ be the set of groups of nodes in $\mathcal{G}$ divided by $\mathcal{S}$, i.e. $\mathcal{V}= \mathcal{V}_{p_2} \bigcup \mathcal{V}_{p_2}$, where $\mathcal{V}_{p_i} = \{v_i| v_i \in \mathcal{V}, s_i = 0, s_i \in \mathcal{S}\}$ and $\mathcal{V}_{p_2} = \{v_i| v_i \in V, s_i = 1, s_i \in \mathcal{S}\}$. A function $f_G: G \to R^{n\times d_h}$ learns the node embeddings of $G$, i.e.,
\begin{equation}
    f_G(\mathcal{G}, \mathcal{S}) = H,
\end{equation}
where $H \in R^{n\times d_h}$ is the node embedding matrix, $|V| = n$ and $d_h$ is the dimension of node embeddings. $f$ satisfies group fairness and individual fairness within groups if and only if $H$ does not contain sensitive information and for $\forall p_k \in \mathcal{P}$,
    \begin{equation}
        \Arrowvert\mathbf{h}_i - \mathbf{h}_j\Arrowvert_2 \leq c_k \Arrowvert \mathbf{x}_i-\mathbf{x}_j\Arrowvert_2, ~\forall i, j \in \mathcal{V}_{p_k}, 
    \end{equation}
where $\mathbf{h}_i$ is $i$th node embedding learned from $f$,  $\mathbf{x}_i$ is the $i$th node feature from $X$ and $c_k \in R^+$ is the Lipschitz constant for group $p_k$. }

\section{Methodology}
\label{sec:method}
\subsection{Framework}

We propose a  novel framework that balances group fairness and individual fairness in the groups to address the problem shown in figure \ref{fig1} and provide a more precise measurement for group fairness. Both group fairness and individual fairness have limitations in real-world application. While group fairness focuses on fairness at the group level, it ignores individual fairness within these groups. Current individual fairness approaches measure individual fairness by Lipchitz Continous \citep{kang2020inform}. However, strictly adhering to individual fairness may lead to conflicts with group fairness \citep{dwork2012fairness}. Our method provides a novel framework to address the above challenges by proposing a novel definition of individual fairness within groups. Combining this new definition with group fairness, we develop a more precise and reliable approach to guarantee fairness in graph learning. 

The detailed algorithm of FairGI is shown in Algorithm \ref{alg1}. 
Since our method allows for model training even when sensitive labels are partly missing, we initially train a sensitive attribute 
estimator utilized GCN \citep{kipf2016semi} to predict the unlabeled 
sensitive attributes. For the node classification task, we employ GAT \citep{wang2019kgat} to generalize node embedding and predict 
target labels. We design a novel loss function to achieve individual fairness within groups in our framework. To guarantee group fairness, we employ an adversarial learning layer that hinders adversaries from precisely predicting sensitive attributes, thereby reducing the bias from sensitive information. Unlike FairGNN \citep{dai2021say}, which solely optimizes SP, we theoretically demonstrate that our adversarial loss function can enhance group fairness in terms of both EO and SP. In addition, we devise a conditional covariance constraint loss function to increase the stability of the adversarial learning process and prove that optimizing the loss function leads to the minimum value of EO.

\begin{figure}[t]
\centering
\vspace{-5mm}
\includegraphics[width=0.7\linewidth]{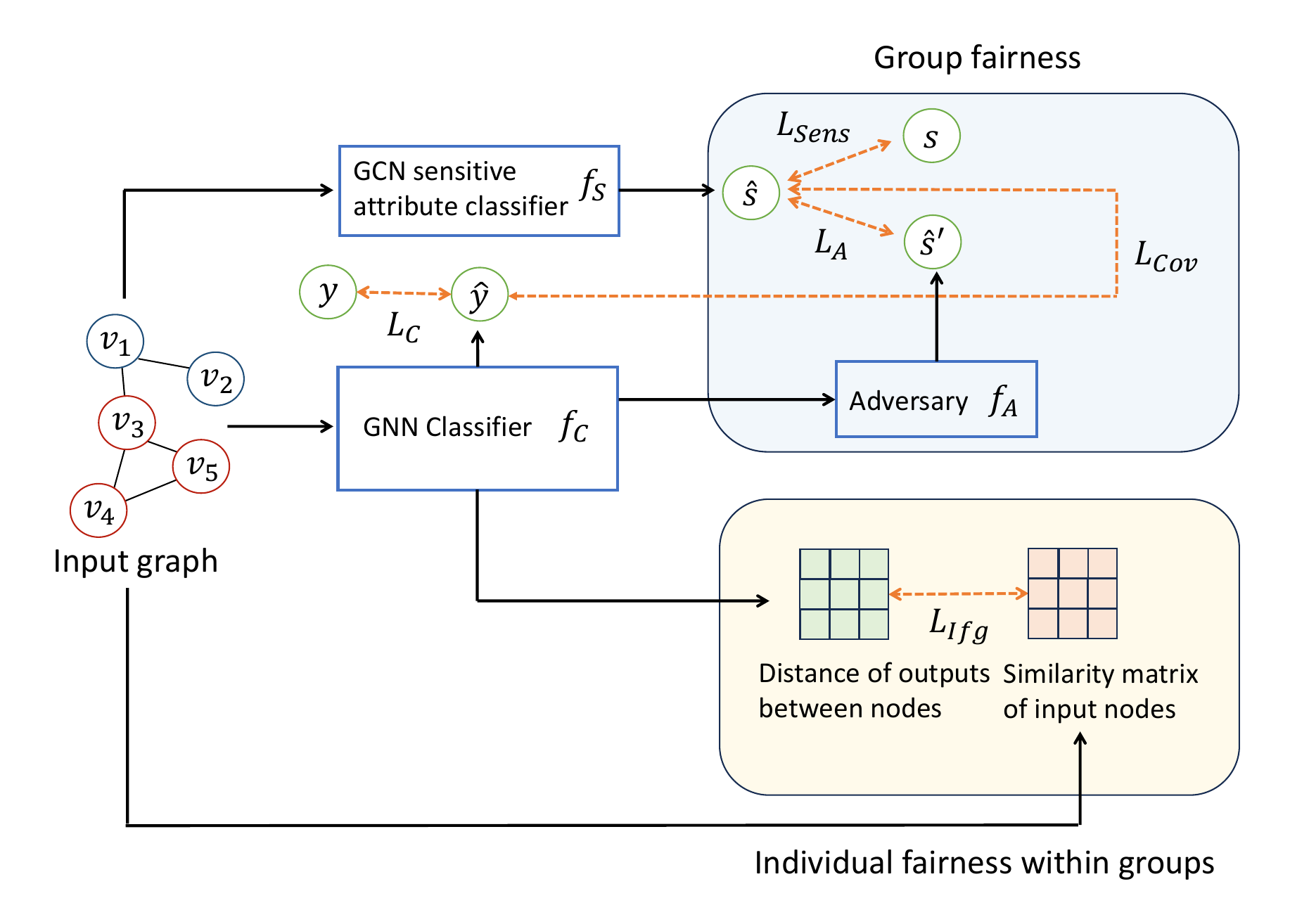}
\caption{Overview of FairGI. Our method comprises three main parts, i.e., an individual fairness module, an group fairness module and a GNN classifier for node prediction.}
\label{framework2}
\vspace{-3mm}
\end{figure}

Fig. \ref{framework2} shows the framework of FairGI. In this framework, we employ a GNN classifier for the node classification task. To ensure individual fairness within groups, we propose a loss function $L_{Ifg}$ to minimize bias among individuals within the same group. To mitigate group bias, we incorporate adversarial learning and develop covariance loss functions to optimize EO and SP. Additionally, we enhance the adversarial learning process by integrating a sensitive attribute classifier $f_S$.


The  comprehensive  loss function of our approach is:
\begin{equation}
    L = L_C +  L_{G}  + \alpha L_{Ifg},
\end{equation}
where $L_C$ is the loss for node label prediction, $L_G$ represents the loss for debiasing group unfairness, and $L_{Ifg}$ represents the loss for mitigating individual unfairness within groups.  


\begin{algorithm}[t]
	\caption{Algorithm of our framework}
 		\label{alg1}
	\begin{algorithmic}[1]
            \State \textbf{Input: $\mathcal{G}(\mathcal{V}, \mathcal{E}), X, \mathcal{S}$}
            \State \textbf{Output: Sensitive attribute classifier $f_S$,  node classifier $f_C$, node label prediction $\hat{y}$}
            \State Train sensitive attribute classifier $f_S$ by given sensitive attribute labels using loss function $L_{Sens}$ in Eq.(\ref{loss_sens}).
            \Repeat
            \State Obtain estimated sensitive attribute $\hat{s}, \hat{s} \in \mathcal{\Tilde{S}}$ by $f_S$.
            \State Optimize $f_G$ to predict the node label by loss function $L_C$ in Eq.(\ref{lc})
            \State Optimize $f_G$ to debias group unfairness by loss function $L_G$ in Eq.(\ref{lg})
            \State Optimize $f_G$ to debias individual unfairness within groups by loss function $L_{Ifg}$ in Eq.(\ref{loss3})
            \State Optimize adversary $f_A$ by $L_A$ in Eq.(\ref{la})
            \Until{Converge}\\
            \Return {$f_S$, $f_G$ and $\hat{y}$}
	\end{algorithmic}
\end{algorithm}




\subsection{Optimization of Individual Fairness within Groups}
\subsubsection{Challenges of balancing individual fairness and group fairness}
We can observe that the definitions of group fairness, especially SP and EO, may contradict individual fairness in specific circumstances.  As noted by \citep{dwork2012fairness},  potential discrepancies can arise between group and individual fairness when the distance between groups is significant. Assume $\Gamma$ represents the protected group and $\Gamma'$ denotes the unprotected group. If there is a considerable distance between individuals in $\Gamma$ and those in $\Gamma'$, strictly adhering to individual fairness may not guarantee similar outcomes for both groups. Thus, this can lead to a conflict with the goal of group fairness, which seeks to maintain equal treatment and opportunity for all groups. Inspired by \citep{dwork2012fairness},  we alleviate the conflicts between SP and individual fairness by loosening the Lipschitz restriction between different groups. 

\subsubsection{Loss function for individual fairness within groups}
To balance group and individual fairness and address the limitation of group fairness, we introduce a novel definition of individual fairness within groups. Based on the proposed definition, we design a loss function that ensures individual fairness within the known groups. Note that our approach is also applicable to groups that are not mutually exclusive.

\begin{Definition} (Individual Fairness within Groups.)
The measurement of individual fairness within group $p$ is:
\begin{equation}
    L_{p}(Z) = \frac{\sum_{v_i \in \mathcal{V}_p}\sum_{v_j \in \mathcal{V}_p} \Arrowvert \mathbf{z}_i - \mathbf{z}_j\Arrowvert_2^2 M[i, j] }{n_p},
    \label{lp}
\end{equation}
where $\mathcal{V}_p$ represents the set of nodes in group $p$ and  $Z$ denotes the node embedding matrix. $\mathbf{z}_i$ is the $i$th row of $Z$. $M$ is the similarity matrix of nodes. $L$ is the laplacian matrix of $M$ and $n_p$ is the number of pairwise nodes in group $p$ with nonzero similarities in $M$. 
\end{Definition}

Our objective focuses on optimizing individual fairness within each group, guaranteeing equitable treatment for individuals in the same groups. To achieve this, our loss function is designed to minimize the maximum individual unfairness over all groups in our loss function.

Firstly, we consider the loss function that  minimizes the maximum unfairness over all the groups as follows:
\begin{equation}
    f^* = \textrm{argmin}_{f \in \mathcal{H}}\{max_{p \in \mathcal{P}}L_p(Z)\},
    \label{loss1}
\end{equation}
where $Z = f(\cdot)$, $\mathcal{H}$ is a class of graph learning models, $\mathcal{P}$ is the set of groups and $L_p$ is the individual loss for group $p$. Motivated by the concept of guaranteeing the optimal situation for the most disadvantaged group, as presented in \citep{diana2021minimax}, we employ the minimax loss function from Eq. (\ref{loss1}). This approach prioritizes minimizing the maximum unfairness across groups, rather than simply aggregating individual fairness within each group to ensure a more equitable outcome.


The optimal solution in Eq.(\ref{loss1}) is hard to obtain, thus we can relax the loss function as expressed in Eq. (\ref{loss2}).  Given an error bound $\gamma$ for each group, the extension of the minimax problem can be formulated as follows:
\begin{equation}
\begin{aligned}
    &\textrm{minmize}_{f \in \mathcal{H}} \sum_{p \in \mathcal{P}} L_p(Z),~
    &\textrm{subject to } L_p(f) \leq \gamma, p \in \mathcal{P}.
\end{aligned}
\label{loss2}
\end{equation}

In our framework, we not only focus on the loss of individual fairness within groups in Eq.(\ref{loss2}), but also consider the loss of label prediction and group fairness. Thus, we further convert Eq.(\ref{loss2}) into the unconstrained loss function shown in Eq.(\ref{loss3}) by introducing Lagrange multiplier $\lambda_p$ to the loss function. We can achieve individual fairness within groups by minimizing the loss function below:
\begin{equation}
     L_{Ifg} = \sum_{p \in \mathcal{P}} L_p(Z) + \sum_{p \in \mathcal{P}} \lambda_p(L_p(Z) - \gamma),
\label{loss3}
\end{equation}
where $\lambda_p$ and $\gamma$ are hyperparameters in our model.

\subsection{Enhancing Group Fairness Through Ensuring Equal Opportunity and Statistical Parity}


In this section, we enhance group fairness by introducing a novel loss function tailored to optimize EO.  This approach marks a significant difference from the covariance constraint proposed by FairGNN \citep{dai2021say}, which only optimizes SP. Additionally, our methodology integrates the loss functions for EO and SP, enabling the simultaneous optimization of both EO and SP, thus presenting a comprehensive framework for enhancing group fairness in model predictions.


\subsubsection{Adverserial learning} As we address the circumstance where certain sensitive labels are absent, we utilized GCN \citep{kipf2016semi} to train the sensitive estimator $f_S$, and the loss function for the sensitive label prediction is:
\begin{equation}
     L_{Sens} = -\frac{1}{|\mathcal{V}|} \sum_{i \in \mathcal{V}} ((s_i)\textrm{log}(\hat{s}_i) + (1 - s_i)\textrm{log}(1 - \hat{s}_i)),
     \label{loss_sens}
\end{equation}
where $s_i$ is the sensitive attribute for the $i$th node, $\hat{s}_i$ is the predicted senstive labels.




To optimize SP, the min-max loss function of adversarial learning  is \citep{dai2021say}:
\begin{equation}
    \min \limits_{\Theta_C} \max \limits_{\Theta_{A}} L_{A_1} = \E_{h \sim p(h|\hat{s} = 1)}[\textrm{log}(f_{A}(h))] + \E_{h \sim p(h|\hat{s} = 0)}[\textrm{log}(1 - f_{A}(h))],
    \label{adv1}
\end{equation}
where $\Theta_C$ is the parameters for graph classifier $f_C$, $\Theta_{A}$ is the parameters for adversary $f_{A}$ and $h$ is the node presentation of the last layer of GNN classifier $f_C$. $h \in p(h|\hat{s} = 1)$ denotes  sampling nodes  from the protected group within the graph $\mathcal{G}$.


While FairGNN demonstrates that optimizing Eq. (\ref{adv1}) can achieve the minimum SP \citep{dai2021say} in the GNN classifier, it does not guarantee the attainment of the minimum EO. Both EO and SP are significant metrics for group fairness. Optimizing solely for SP can adversely affect the performance of EO, leading to model bias. To address the shortage of Eq. (\ref{adv1}), we propose a novel min-max loss function designed for adversarial learning to achieve the minimum EO in Eq. (\ref{adv2}). 
\begin{equation}
        \min \limits_{\Theta_C} \max \limits_{\Theta_{A}} L_{A_2}  = \E_{h \sim p(h|\hat{s} = 1, y = 1)}[\textrm{log}(f_{A}(h))] + \E_{h \sim p(h|\hat{s} = 0, y = 1)}[\textrm{log}(1 - f_{A}(h))].
    \label{adv2}
\end{equation}

The Theorem \ref{thm1} in the Appendix demonstrates that the optimal solution of Eq. (\ref{adv2}) ensures the GNN classifier satisfies $\Delta EO = 0$, given two easily attainable assumptions. In addition, we can also mitigate sensitive information by letting $f_A$ predict the sensitive attribute closer to a uniform distribution, as inspired by \cite{gong2020jointly}.


To simultaneously optimize EO and SP, we combining Eq.(\ref{adv1}) and Eq.(\ref{adv2}) to obtain the loss function for  adversarial learning  as:
\begin{equation}
    L_A = L_{A_1} + L_{A_2}.
    \label{la}
\end{equation}



\subsubsection{Covariance constraint} The limitation of adversarial debiasing is instability. Similar to adversarial learning, FairGNN focuses on optimizing Statistical Parity (SP) through the covariance constraint loss as described by \citep{dai2021say} but ignores the EO. Eq. (\ref{cov1}) shows the 
\begin{equation}
        L_{R_1} = \lvert Cov(\hat{s}, \hat{y})\lvert = \lvert \E[(\hat{s} - \E[\hat{s}](\hat{y} - \E[\hat{y}])]\lvert,
    \label{cov1}
\end{equation}
Note that $\Delta EO = 0$ is not the prerequisite of  $L_{R_1} = 0$. Eq. (\ref{cov1}) does not enhance the model fairness in terms of EO. 

Thus, we propose a covariance constraint loss function to optimize EO as follows:
\begin{equation}
    L_{R_2} = |Cov(\hat{s}, \hat{y}| y = 1)| = |\E[(\hat{s} - \E[\hat{s}|y = 1](\hat{y} - \E[\hat{y})| y =1]|y = 1]|.
    \label{cov2}
\end{equation}


Theorem \ref{thm2} in the Appendix shows that under the mild assumption, $L_{R2} = 0$ is the prerequisite of $\Delta EO = 0$. Consequently, Eq. (\ref{cov2}) effectively optimizes EO.
We can further enhance group fairness in our model by optimizing EO and SP using Eq. (\ref{cov_final}).
\begin{equation}
    L_{Cov} = L_{R1} + L_{R2}.
    \label{cov_final}
\end{equation}
In conclusion, the loss function that we utilize to mitigate group fairness is:
\begin{equation}
    L_{G} = \beta L_A  +\gamma L_{Cov},
    \label{lg}
\end{equation}
where $\beta$ and $\gamma$ are hyperparameters.

\subsection{Node Prediction}
For the node prediction task, we employ GAT \citep{wang2019kgat} to predict node labels. The loss function for GNN classifier $f_C$ is:
\begin{equation}
    L_C = -\frac{1}{|\mathcal{V}|} \sum_{i \in \mathcal{V}} ((y_i)\textrm{log}(\hat{y_i}) + (1 - y_i)\textrm{log}(1 - \hat{y_i})).
    \label{lc}
\end{equation}

In the conclusion, we highlight the distinct contributions of our work in comparison to existing methodologies. Our research introduces a novel problem statement that aims to address both group fairness and individual fairness. This dual focus sets our work apart from others like FairGNN, which only considers group fairness.  Moreover, when considering group fairness, our approach not only optimizes for SP but also for EO, providing a more comprehensive group fairness optimization. In contrast, FairGNN's loss functions are solely directed at SP. This broader and more comprehensive fairness optimization strategy underscores the innovative contributions of our framework.









\section{Experiments}

\label{sec:exp}

\begin{table*}[t]
\centering
\vspace{-5mm}
\caption{Comparisons of our method and baselines on three datasets. $\uparrow$ denotes the larger value is the better and  $\downarrow$ indicates the smaller value is the better. Best performances are in bold. }
\resizebox{0.9\textwidth}{!}{%
\begin{tabular}{c|c|c|c|c|c|c|c}
		\toprule
		Dataset &Method & Acc $\uparrow$ & AUC $\uparrow$ & $\Delta SP$ $\downarrow$ & $\Delta EO$ $\downarrow$ & MaxIG $\downarrow$ & IF $\downarrow$\\
		\hline  
                    &GCN & 68.82$\pm$0.17& 73.98$\pm$0.07 & 2.21$\pm$0.61 & 3.17$\pm$1.10 & 5.69$\pm$0.08 & 899.54$\pm$13.10 \\
                    &GAT & 69.14$\pm$0.68& 74.24$\pm$0.90 & 1.40$\pm$0.64 & 2.86$\pm$0.49 & 6.10$\pm$0.62 & 880.89$\pm$89.97 \\ \cmidrule{2-8}
                    &PRF & 55.39$\pm$0.08& 53.83$\pm$0.02 & 1.08$\pm$0.09 & 1.82$\pm$0.18 & 0.64$\pm$0.01 & 101.26$\pm$1.27\\
                    &InFoRM & 68.77$\pm$0.39& 73.69$\pm$0.10 & 1.84$\pm$0.69 & 3.58$\pm$1.15 & 1.52$\pm$0.05 & 238.41$\pm$7.97\\
            Pokec-n &NIFTY & 65.97$\pm$0.57& 69.87$\pm$0.64 & 4.62$\pm$0.52 & 7.32$\pm$0.94 & 1.87$\pm$0.15 & 310.84$\pm$26.25\\
                    & GUIDE  & 69.46$\pm$0.04 & 74.67$\pm$0.01 & 2.95$\pm$0.11 & 0.80$\pm$0.18 & 0.61$\pm$0.00 & 101.77$\pm$0.28\\
	       & FairGNN  & \textbf{69.86$\pm$0.30} & \textbf{75.58$\pm$0.52} & 0.87$\pm$0.38 & 2.00$\pm$1.08 & 1.26$\pm$0.96&192.29$\pm$142.06 \\
	                & Ours  & 68.86$\pm$0.58 & 75.07$\pm$0.1 & \textbf{0.63$\pm$0.37} & \textbf{0.75$\pm$0.30} & \textbf{0.47$\pm$0.09} & \textbf{67.41$\pm$13.68}\\
	    \midrule
                    &GCN & 69.08$\pm$2.02  & 74.20$\pm$1.69 & 17.12$\pm$7.10 & 10.03$\pm$4.92 & 25.99$\pm$2.12 &17.87$\pm$1.74\\
                    &GAT & 70.80$\pm$3.70  & 72.48$\pm$4.32 & 11.90$\pm$8.94 & 16.70$\pm$10.57& 21.14$\pm$10.86 &20.79$\pm$9.95\\ \cmidrule{2-8}
                    &PRF &55.58$\pm$0.93  & 58.26$\pm$4.45 & 1.99$\pm$0.99 & 2.22$\pm$1.65& 4.47$\pm$2.25 & 3.06$\pm$1.55\\
                    &InFoRM & 68.71$\pm$2.78  & 74.19$\pm$1.85 & 16.64$\pm$5.64 & 12.75$\pm$6.80& 26.52$\pm$8.25 & 18.82$\pm$5.17\\
           NBA   &NIFTY & 70.55$\pm$2.30  & 76.18$\pm$0.83 & 11.82$\pm$4.28 & 5.69$\pm$3.48& 17.14$\pm$5.01 & 11.93$\pm$3.43\\
	                & GUIDE  & 63.31$\pm$2.86 & 67.46$\pm$3.44& 13.89$\pm$5.11 & 10.50$\pm$4.76 & 29.54$\pm$16.34 & 19.84$\pm$10.75    \\
	              & FairGNN & 72.95$\pm$2.10 & 77.37$\pm$1.11 & 1.19$\pm$0.43 & \textbf{0.62$\pm$0.43} & 10.91$\pm$12.59 &18.51$\pm$23.72\\ 
	                & Ours  & \textbf{73.13$\pm$1.75} & \textbf{79.28$\pm$0.46} &\textbf{ 0.43$\pm$0.28} & \textbf{0.62$\pm$0.32} & \textbf{0.12$\pm$0.11} &\textbf{0.08$\pm$0.07}\\

	 \midrule
                    &GCN & 70.35$\pm$0.99& 65.18$\pm$6.58 & 14.55$\pm$6.13 & 13.92$\pm$6.00 & 5.18$\pm$1.34 & 39.11$\pm$6.69\\
                    &GAT & 70.89$\pm$1.84& 71.30$\pm$1.64 & 15.95$\pm$2.40 & 15.96$\pm$2.77 & 5.88$\pm$3.34 & 35.28$\pm$15.41\\ \cmidrule{2-8}
                    &PRF &69.87$\pm$0.09& 69.90$\pm$0.04 & 14.63$\pm$0.78 & 13.96$\pm$0.79 & 5.80$\pm$0.08 & 39.79$\pm$0.63\\
                    &InFoRM & 69.91$\pm$3.70& 65.55$\pm$5.6 & 14.80$\pm$3.84 & 14.82$\pm$4.18 & 4.09$\pm$1.68 & 33.62$\pm$13.86\\
           Credit   &NIFTY & 68.74$\pm$2.34& 68.84$\pm$0.41 & 9.91$\pm$0.30 & 9.07$\pm$0.63 & 2.92$\pm$1.18 & 24.73$\pm$9.75\\
	                & GUIDE  & 62.01$\pm$0.01 & 67.44$\pm$0.01& 13.88$\pm$0.10 & 13.54$\pm$0.06 & \textbf{0.22$\pm$0.01} & 1.90$\pm$0.01    \\
	              & FairGNN &73.40$\pm$0.15  & \textbf{70.18$\pm$0.03} & 3.91$\pm$0.11 & 3.49$\pm$0.25 & 1.88$\pm$0.09 &13.84$\pm$0.70\\ 
	                & Ours  &\textbf{74.09$\pm$0.13}  &  68.81$\pm$0.11&\textbf{ 3.84$\pm$0.22} & \textbf{2.60$\pm$0.20} & \textbf{0.22$\pm$0.01} &\textbf{1.84$\pm$0.10}\\

		 \bottomrule
	\end{tabular}%
}
\label{tab:experiment1}
\vspace{0mm}
\end{table*}

In this section, we conduct a comprehensive comparison between our proposed method and other cutting-edge models, evaluating their performance on real-world datasets to demonstrate the effectiveness of our approach. 

\subsection{Datasets and Baselines}
In this experiment, we utilize three public datasets, Pokec\_n \cite{dai2021say} , NBA \cite{dai2021say}, and Credit \cite{yeh2009comparisons}.



We compare our method with other state-of-art fairness models for graph learning. In our comparison, we include basic GNNs like GCN \citep{kipf2016semi} and GAT \citep{velivckovic2017graph}, which don’t fix bias. We also include GNNs like PFR \citep{lahoti2019operationalizing} and InFoRM \citep{kang2020inform}, aiming at individual fairness. To compare group fairness methods, we include FairGNN \citep{dai2021say}. Additionally, we include NIFTY \citep{agarwal2021towards}, a method based on causal inference for fairness, as a baseline in our study. Further descriptions of datasets and baselines can be found in the Appendix.

\subsubsection{Evaluation metrics}
In this experiment, we primarily focus on analyzing and comparing individual fairness within groups as well as group fairness. Furthermore, we assess the performance of the prediction task by employing metrics such as Area Under the Curve (AUC) and Accuracy (ACC). We include MaxIG, IF, SP, and EO in the experiments for the fairness evaluation metrics. MaxIG is defined as:
\begin{equation}
    {\rm MaxIG} = \max(L_p(Z)), ~p\in \mathcal{P},
\end{equation}
where $L_p(\cdot)$ can be computed in Eq. (\ref{lp}) and $Z$ is the output of GNN Classifier.

\subsection{Results and Analysis}

\subsubsection{Individual unfairness  and group unfairness in graph neural networks}

Based on the experimental results presented in Table \ref{tab:experiment1}, several key findings emerge regarding the performance and biases of various GNNs.

 Traditional GNNs, such as GCN and GAT, exhibit both individual and group biases. This suggests that while these models may have good performance, they do not adequately handle fairness issues. Models that address group fairness, such as FairGNN, demonstrate good performance in group fairness metrics like SP and EO, but struggle with individual fairness metrics, such as IF and MaxIG. This underscores the challenge of simultaneously optimizing for both group and individual fairness.

On the contrary, models like PRF, InFoRM, NIFTY, and GUIDE, which primarily target individual fairness, perform well in mitigating individual biases. However, they have poor performance in group fairness. This dichotomy indicates a potential trade-off regarding group-level fairness while promoting individual fairness. These findings emphasize the need for more comprehensive solutions that simultaneously address individual and group biases.

\subsubsection{Effectiveness of FairGI in mitigating both individual fairness and group fairness }

The results in Table \ref{tab:experiment1} highlight the efficacy of our approach, leading to two primary observations:
(1) Our method outperforms competing methods by ensuring superior group fairness and intra-group individual fairness while retaining comparable prediction accuracy and AUC of the ROC curve; (2) While our technique is constrained only to fairness within groups, it remarkably achieves superior population individual fairness compared to baselines. This suggests that we can attain the pinnacle of population individual fairness by concentrating solely on intra-group individual fairness and overlooking inter-group individual fairness. This outcome is intuitively reasonable given the potential substantial variances among individuals from different groups.


\begin{figure}[t]
\vspace{-5mm}
\minipage{0.32\textwidth}
  \includegraphics[width=\linewidth]{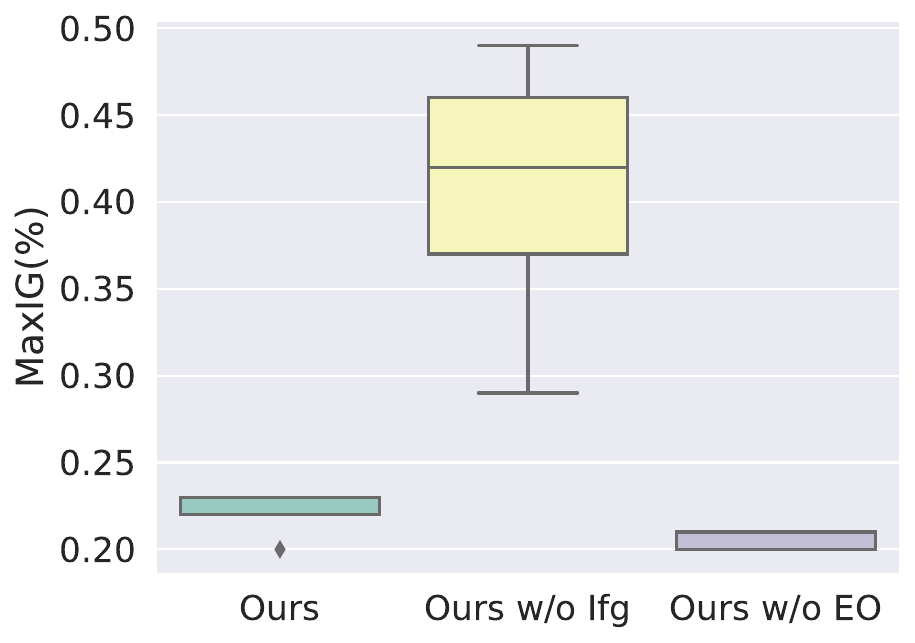}
\endminipage\hfill
\minipage{0.32\textwidth}
  \includegraphics[width=\linewidth]{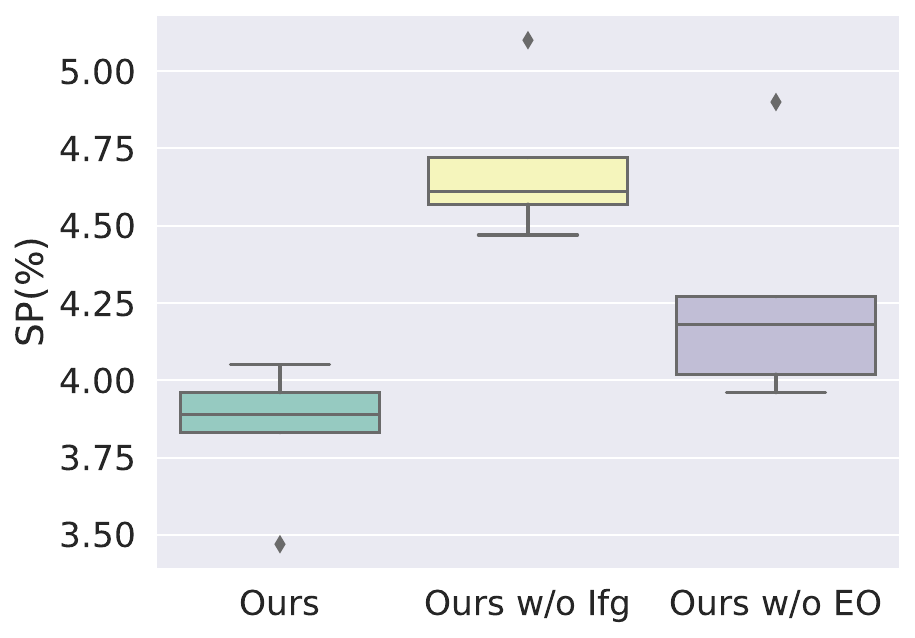}
\endminipage\hfill
\minipage{0.32\textwidth}%
  \includegraphics[width=\linewidth]{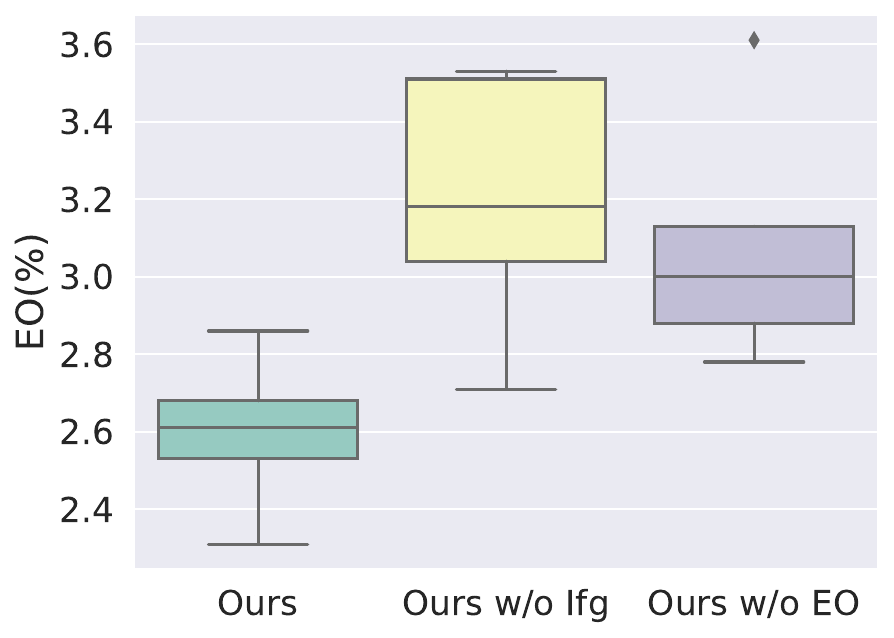}
\endminipage
 \caption{Comparison of our method, our method without loss function of individual fairness within groups, our method without the optimization for EO.}
 \label{ab}
 \vspace{-4mm}
\end{figure}

\subsection{Ablation Studies}

In the ablation study, we examine the impact of two modules, individual fairness within groups and optimization of equal opportunity, on the performance of our method. We conduct a comparison between our method and two of its variants using the Credit dataset. The first variant, "ours w/o Ifg," omits the individual fairness within groups loss $L_{Ifg}$ from our method. The second variant, "ours w/o EO," eliminates the optimizations for equal opportunity, specifically the loss functions $L_{A_2}$ and $L_{R2}$, from our method. 

Figure \ref{ab} illustrates the comparative results. We can observe that upon the removal of $L_{Ifg}$, there is a noticeable increase in MaxIG, EO, and SP, with MaxIG experiencing the most significant rise. This strongly attests to the efficacy of the loss function $L_{Ifg}$ in enhancing individual fairness. When we disregard the optimizations for EO, MaxIG remains relatively unchanged while both EO and SP increase. This highlights the crucial role of $L_{A2}$ and $L_{Cov2}$ in optimizing group fairness. 




\section{Conclusion}
\label{sec:con}
In this paper, we present an innovative problem that considers both group fairness and individual fairness within groups.  In this particular context, we propose a novel definition named MaxIG for individual fairness within groups. Furthermore, we propose a novel framework named FairGI to achieve both group fairness and individual fairness within groups in graph learning. FairGI leverages the similarity matrix to mitigate individual unfairness within groups. Additionally, it exploits the principles of adversarial learning to mitigate group unfairness. Extensive experiments demonstrate that FairGI achieves the best results in fairness and maintains comparable prediction performance. 

\bibliography{arxiv}
\bibliographystyle{arxiv}
\newpage
\appendix
\section{Appendix}
\subsection{Proof of Theorems}
\begin{proposition} Let Eq.(\ref{adv2}) be the loss function of adversary learning. The optimal solution of Eq.(\ref{adv2}) is achieved if and only if $p(h|\hat{s} = 0, y = 1) = p(h|\hat{s} = 1, y = 1)$.
\label{pos1}
\end{proposition}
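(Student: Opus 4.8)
The plan is to run the classical minimax analysis of generative adversarial networks, treating the GNN classifier $f_C$ (through the last-layer representation $h$ it induces) as the ``generator'' and $f_A$ as the ``discriminator.'' Write $p_1(h) := p(h \mid \hat s = 1, y = 1)$ and $p_0(h) := p(h \mid \hat s = 0, y = 1)$ for the two conditional densities of the representation. Then Eq.(\ref{adv2}) reads
\[
L_{A_2}(f_C, f_A) = \int p_1(h)\log f_A(h)\,dh + \int p_0(h)\log\big(1 - f_A(h)\big)\,dh,
\]
and the goal is to show that the saddle point $\min_{f_C}\max_{f_A} L_{A_2}$ is attained precisely when $p_1 = p_0$.

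First I would solve the inner maximization for a fixed classifier. Pointwise in $h$, the integrand has the form $a\log y + b\log(1-y)$ with $a = p_1(h)\ge 0$ and $b = p_0(h)\ge 0$; elementary calculus shows this is maximized over $y \in (0,1)$ at $y^\star = a/(a+b)$. Hence, under the standing assumption that the adversary family is expressive enough to realize it, the optimal adversary is $f_A^\star(h) = p_1(h)/\big(p_1(h) + p_0(h)\big)$. Next I would substitute $f_A^\star$ back into $L_{A_2}$ and rewrite the resulting functional of $f_C$ in terms of divergences: using $\log\frac{p_1}{p_1+p_0} = \log\frac{p_1}{(p_1+p_0)/2} - \log 2$ (and symmetrically for $p_0$),
\[
L_{A_2}(f_C, f_A^\star) = -\log 4 + \KL\!\big(p_1 \,\|\, \tfrac{p_1+p_0}{2}\big) + \KL\!\big(p_0 \,\|\, \tfrac{p_1+p_0}{2}\big) = -\log 4 + 2\,\mathrm{JSD}(p_1 \,\|\, p_0).
\]
Since the Jensen--Shannon divergence is nonnegative and vanishes if and only if the two distributions coincide, the outer minimization over $f_C$ reaches its minimum value $-\log 4$ exactly when $p_1 = p_0$, i.e. $p(h\mid \hat s = 0, y = 1) = p(h\mid \hat s = 1, y = 1)$. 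For the converse direction, when this equality holds we get $f_A^\star \equiv \tfrac12$ and $L_{A_2} = -\log 4$, so $(f_C, f_A^\star)$ is indeed a saddle point; this closes the ``if and only if.''

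The main point requiring care — rather than a deep obstacle — is the realizability/expressivity assumptions implicit in the argument: deriving that the inner optimum equals $f_A^\star$ presupposes the adversary's hypothesis class contains (or uniformly approximates) this ratio function, and the claim that the minimizing classifier is attainable presupposes that some $f_C$ can in fact induce $p_1 = p_0$. I would state these as standing assumptions, as is customary in the GAN literature, alongside the mild measure-theoretic caveats (existence of the conditional densities, common support so the logarithms are well defined, and finiteness of the integrals). I would also be explicit that ``optimal solution'' here means a saddle point of the min--max problem, so that the forward implication says every saddle point must zero out the JSD and the reverse implication exhibits matching conditionals as producing one.
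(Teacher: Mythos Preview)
Your proposal is correct and follows essentially the same route as the paper's own proof: derive the Bayes-optimal discriminator $f_A^\star(h)=p_1(h)/(p_1(h)+p_0(h))$, substitute it back, and rewrite the resulting value function as $-\log 4 + 2\,\mathrm{JSD}(p_1\|p_0)$ to conclude via nonnegativity and the equality case of the Jensen--Shannon divergence. The paper cites \cite{dai2021say} for the form of $f_A^\star$ rather than deriving it pointwise, and is less explicit about the expressivity and ``if'' direction than you are, but the argument is otherwise identical.
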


\begin{proof}
By Proposition 1. in \citep{dai2021say}, the optimal value of adversary is in Eq.(\ref{prop_eq_1})
 \begin{equation}
  f_{A_2}^*(h) = \frac{p(h|\hat{s}=1, y = 1)}{p(h|\hat{s}=1, y = 1)+p(h|\hat{s}=0, y = 1)}.
  \label{prop_eq_1}
 \end{equation}
 We denote $B = p(h|\hat{s} = 1, y = 1)$ and $C =  p(h|\hat{s} = 0, y = 1)$. Thus, the min-max loss function in Eq.(\ref{adv2}) can be written as the following with the optimal solution of adversary:
 \begin{equation}
 \begin{aligned}
     L_{A_2} & = \E_{h\in B}[log\frac{B}{B+C}] + \E_{h\in C}[log\frac{C}{B + C}] \\
     & = \E_{h\in B}[log\frac{B}{\frac{1}{2}(B+C)}] + \E_{h\in C}[log\frac{C}{\frac{1}{2}(B + C)}] - 2log2\\
     & = D_{KL}(B||B+C) + D_{KL}(C||B+C) - 2log2\\
     & = 2 JSD(B||C) - 2log2,
\end{aligned}
 \end{equation}
 where $D_{KL}(\cdot)$ is the Kullback–Leibler divergence and $JSD(\cdot)$ is the Jensen–Shannon divergence.

 We know that $JSD(B||C)$ is non-negative and equals to 0 if and only if distributions $B$ and $C$ are equal. Thus, the loss function $L_{A_2}$ achieves the minimum value if and only if  $p(h|\hat{s} = 0, y = 1) = p(h|\hat{s} = 1, y = 1)$. The proof is adapted to Proposition 4.1 in \cite{dai2021say}.
 

\end{proof}

\begin{Theorem}
Let $\hat{y}$ be the prediction label of GNN classifier $f_G$, $h$ be the node presentation generated by GNN classifier $f_G$. We assume:
\begin{enumerate}
    \item The prediction of sensitive attribute $\hat{s}$ and $h$ are conditionally independent, i.e., $p(\hat{s},h|s, y = 1) = p(\hat{s}|s, y = 1)p(h|s, y = 1)$.
    \item $p(s = 1|\hat{s} = 1, y = 1)  \neq p(s = 1|\hat{s} = 0, y = 1)$.
\end{enumerate}
If Eq.(\ref{adv2}) achieves the global optimum, the prediction of GNN classifier $f_G$ will satisfy equal opportunity, i.e. $p(\hat{y}|s=0, y = 1) = p(\hat{y}|s=1, y = 1)$.
\label{thm1}
\end{Theorem}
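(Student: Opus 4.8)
The plan is to combine Proposition~\ref{pos1} with the two stated assumptions to transfer the distributional equality on node representations into the desired equality on predictions. First I would invoke Proposition~\ref{pos1}: since Eq.~(\ref{adv2}) attains its global optimum, we have $p(h|\hat{s}=0,y=1) = p(h|\hat{s}=1,y=1)$. The goal is to show $p(\hat{y}|s=0,y=1) = p(\hat{y}|s=1,y=1)$, and since $\hat{y}$ is a deterministic (or at least $h$-measurable) function of $h$ under $f_G$, it suffices to show $p(h|s=0,y=1) = p(h|s=1,y=1)$; the equality of the $\hat{y}$-distributions then follows by pushing forward along the classifier map.

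The core step is therefore to deduce $p(h|s,y=1)$ is independent of $s$ from the fact that $p(h|\hat{s},y=1)$ is independent of $\hat{s}$. I would write, using the conditional independence assumption (Assumption 1),
\begin{equation}
p(h|\hat{s}=k, y=1) = \sum_{j\in\{0,1\}} p(h|s=j,\hat{s}=k,y=1)\, p(s=j|\hat{s}=k,y=1) = \sum_{j\in\{0,1\}} p(h|s=j,y=1)\, p(s=j|\hat{s}=k,y=1),
\end{equation}
where the last equality uses $p(h|s,\hat{s},y=1) = p(h|s,y=1)$, itself a consequence of Assumption 1. Denote $u_j = p(h|s=j,y=1)$ (viewed as functions of $h$) and $a_k = p(s=1|\hat{s}=k,y=1)$. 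Then the optimality equality $p(h|\hat{s}=1,y=1) = p(h|\hat{s}=0,y=1)$ becomes $a_1 u_1 + (1-a_1)u_0 = a_0 u_1 + (1-a_0)u_0$, i.e. $(a_1 - a_0)(u_1 - u_0) = 0$ pointwise in $h$. By Assumption 2, $a_1 \neq a_0$, so $u_1 = u_0$, which is exactly $p(h|s=0,y=1) = p(h|s=1,y=1)$.

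Finally I would close the argument: for any measurable set $T$ in the label space, $p(\hat{y}\in T\mid s=j, y=1) = \int \mathbf{1}[f_G^{\text{pred}}(h)\in T]\, p(h|s=j,y=1)\, dh$, and since the integrand's measure does not depend on $j$, the two conditional prediction distributions coincide, giving $\Delta EO = 0$. The main obstacle I anticipate is justifying the measure-theoretic manipulations cleanly — in particular that Assumption 1 (conditional independence of $\hat{s}$ and $h$ given $s$ and $y=1$) really yields $p(h|s,\hat{s},y=1) = p(h|s,y=1)$, and that the prediction $\hat{y}$ depends on the data only through $h$ so that the pushforward step is valid; everything else is bookkeeping with the law of total probability. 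I would also need to note that $p(s=j|\hat{s}=k,y=1)$ is well-defined, i.e. both conditioning events have positive probability, which is implicit in Assumption 2.
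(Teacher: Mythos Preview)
Your proposal is correct and follows essentially the same route as the paper's own proof: invoke Proposition~\ref{pos1} to get $p(h\mid \hat s=0,y=1)=p(h\mid \hat s=1,y=1)$, expand each side via the law of total probability and Assumption~1 as a mixture over $s$, then use Assumption~2 to cancel the nonzero coefficient and conclude $p(h\mid s=0,y=1)=p(h\mid s=1,y=1)$, from which the EO equality for $\hat y$ follows. Your factorization $(a_1-a_0)(u_1-u_0)=0$ is in fact a slightly cleaner presentation than the paper's ratio manipulation, and your explicit remarks about the pushforward step and the positivity implicit in Assumption~2 are well placed.
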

Combining Eq.(\ref{adv1}) and Eq.(\ref{adv2}), we have the loss function of  adversarial learning  as:
\begin{equation}
    L_A = L_{A_1} + L_{A_2}.
    \label{la}
\end{equation}

\begin{proof}
    By proposition 7.1, we have $p(h|\hat{s} = 0, y = 1) = p(h|\hat{s} = 1, y = 1)$ when we obtain the optimum solution for the loss function \ref{adv2}. Thus, we have
    \begin{equation}
        \sum_{s \in S} p(h, s|\hat{s} = 1, y = 1) = \sum_{s \in S} p(h, s|\hat{s} = 0, y = 1).
    \end{equation}
    Under the conditionally independent assumption in assumption 1, we have
    \begin{equation}
    \begin{aligned}
         &\sum_{s \in S} p(h|\hat{s} = 1, y = 1)p(s|\hat{s} = 1, y = 1) = \\ &\sum_{s \in S} p(h|\hat{s} = 0, y = 1)p(s|\hat{s} = 0, y = 1).        
    \end{aligned}
         \label{thm1_1}
    \end{equation}

    Reformulating the Eq.(\ref{thm1_1}) and by the assumption 2, we obtain

    \begin{equation}
     \begin{aligned}
        \frac{p(h \lvert s = 1, y = 1)}{p(h \lvert s = 0, y = 1)} 
        & = \frac{p(s = 0\lvert \hat{s} = 0, y =1) - p(s = 0\lvert\hat{s} = 1, y = 1)}{p(s = 1\lvert\hat{s} = 1, y = 1) - p(s = 1\lvert \hat{s} = 0, y = 1)} \\
        & = \frac{ 1 - p(s = 1\lvert \hat{s} = 0, y =1) - 1 + p(s = 1\lvert\hat{s} = 1, y = 1)}{p(s = 1\lvert\hat{s} = 1, y = 1) - p(s = 1\lvert \hat{s} = 0, y = 1)} \\
        & = 1
        \label{thm_2}
    \end{aligned}
    \end{equation}

Thus, we have $p(h|s = 1, y = 1) = p(h|s = 0, y = 1)$, which leads to  $p(\hat{y}|s = 1, y = 1) = p(\hat{y}|s = 0, y = 1)$. The equal opportunity is satisfied when we achieve the global minimum in Eq.(\ref{adv2}).    The proof is adapted to Theorem 4.2 in \cite{dai2021say}.
    
\end{proof}

\begin{Theorem}
    Suppose $p(\hat{s}, h\lvert s, y = 1) = p(\hat{s} \lvert s, y = 1)p(h \lvert s,y = 1)$, when $f_G$ satisfy equal opportunity, i.e. $p(\hat{y}, s \lvert y = 1) = p(\hat{y}\lvert y = 1)p( s \lvert y = 1) $, we have $L_{R_2} = 0$.
    \label{thm2}
\end{Theorem}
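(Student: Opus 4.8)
The plan is to establish the conditional independence $\hat{y} \perp \hat{s} \mid y=1$; once that is in hand, $\E[\hat{s}\hat{y}\mid y=1] = \E[\hat{s}\mid y=1]\,\E[\hat{y}\mid y=1]$, so $\Cov(\hat{s},\hat{y}\mid y=1) = 0$, and hence $L_{R_2}=0$ by the definition of $L_{R_2}$ in Eq.~(\ref{cov2}). So the whole argument reduces to a chain of manipulations of conditional distributions.

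First I would use the structural convention of the framework (the same one used in Theorem~\ref{thm1} and in \cite{dai2021say}): the predicted label $\hat{y}$ is produced by the GNN classifier $f_G$ from its last-layer representation $h$, so $\hat{y}$ is a (possibly stochastic) function of $h$ alone, and in particular $\hat{s}$ and $\hat{y}$ are conditionally independent given $h$. Marginalizing $h$ out of the hypothesis $p(\hat{s}, h \mid s, y=1) = p(\hat{s}\mid s,y=1)\,p(h\mid s,y=1)$ then gives
\begin{equation}
p(\hat{s},\hat{y}\mid s,y=1) = \int p(\hat{y}\mid h)\,p(\hat{s}\mid s,y=1)\,p(h\mid s,y=1)\,dh = p(\hat{s}\mid s,y=1)\,p(\hat{y}\mid s,y=1),
\end{equation}
i.e. $\hat{s} \perp \hat{y}\mid (s, y=1)$.

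Next I would lift this to independence conditioned only on $y=1$. Expanding $p(\hat{s},\hat{y}\mid y=1) = \sum_{s} p(\hat{s},\hat{y}\mid s,y=1)\,p(s\mid y=1)$, inserting the factorization just obtained, and invoking equal opportunity in its equivalent form $p(\hat{y}\mid s,y=1) = p(\hat{y}\mid y=1)$, the sum separates:
\begin{equation}
p(\hat{s},\hat{y}\mid y=1) = p(\hat{y}\mid y=1)\sum_{s} p(\hat{s}\mid s,y=1)\,p(s\mid y=1) = p(\hat{s}\mid y=1)\,p(\hat{y}\mid y=1).
\end{equation}
This is exactly the conditional independence sought, and $L_{R_2}=0$ follows.

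The step I expect to require the most care is the first one: it rests on the modelling convention that $\hat{y}$ factors through $h$, so that the Markov relation $\hat{s}\perp h \mid (s, y=1)$ transfers to $\hat{s}\perp \hat{y}\mid (s,y=1)$. If one preferred to avoid that convention, one would instead have to posit directly that $p(\hat{s}, \hat{y}\mid s, y=1)$ factorizes, which is essentially the same hypothesis stated in terms of $\hat{y}$ rather than $h$. Everything after that is routine: a finite marginalization over the binary variable $s$ and the definition of covariance, so no further obstacles are anticipated.
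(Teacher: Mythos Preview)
Your proposal is correct and follows essentially the same route as the paper: both arguments use the assumption that $\hat{y}$ is determined by $h$ to pass from the hypothesis $\hat{s}\perp h\mid(s,y{=}1)$ to $\hat{s}\perp\hat{y}\mid(s,y{=}1)$, then marginalize over $s$ and invoke equal opportunity to obtain $\hat{s}\perp\hat{y}\mid y{=}1$, from which $L_{R_2}=0$ is immediate. The only cosmetic difference is that the paper first isolates the conditional $p(\hat{y}\mid s,\hat{s},y{=}1)=p(\hat{y}\mid s,y{=}1)$ before summing over $s$, whereas you integrate out $h$ directly to get the joint factorization; these are equivalent presentations of the same step.
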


\begin{proof}
    Since  $p(\hat{s}, h\lvert y = 1) = p(\hat{s}\lvert y = 1)p(h\lvert y = 1)$, we have the following equation:
\begin{equation}
     \begin{aligned}
    p(h\lvert s, \hat{s}, y = 1) &= \frac{p(h,s, \hat{s}\lvert y = 1)}{p(s,\hat{s} \lvert y = 1)} \\
    & = \frac{p(\hat{s}, h \lvert s, y = 1)}{p(\hat{s}\lvert s, y = 1)}\\ 
    & = p(h\lvert s , y = 1),
    \end{aligned}
\end{equation}
thus, we have $p(\hat{y}\lvert s, \hat{s}, y = 1) = p(\hat{y}\lvert s , y = 1)$.

If $p(\hat{y}, s \lvert y = 1) = p(\hat{y}\lvert y = 1)p( s \lvert y = 1) $, $p(\hat{y}, \hat{s}|y=1)$ can be written as:
        \begin{equation}
     \begin{aligned}
        p(\hat{y}, \hat{s}| y = 1)
        & = \sum \limits_{s \in S} p(\hat{y} \lvert s, y = 1)p(\hat{s}, s \lvert y = 1)\\
        & = p(\hat{y} \lvert y = 1)p(\hat{s} \lvert y = 1).
        \label{thm_3}
    \end{aligned}
    \end{equation}
Thus, we have $L_{R_2} = \lvert Cov(\hat{s}, \hat{y} \lvert y = 1) \rvert = 0$, which proof the theorem. The proof is adapted by the proof of Theorem 4.3 in \cite{dai2021say}.
\end{proof}

\subsection{Datasets and Baselines}
\textbf{Datasets.} In the experiments, we utilize three  datasets. Table \ref{tab:stat3} shows the summary of the datasets. Our datasets demonstrate comprehensive coverage of diverse data categories and varied sample sizes. The detailed descriptions of the datasets are presented below:

\begin{itemize}
    \item The PockeC dataset, presented by \cite{takac2012data}, serves as a benchmark dataset derived from Slovakian social networks, facilitating the evaluation and development of various algorithms and models in this context. This comprehensive dataset encompasses various features for each individual within the network, such as gender, age, educational background, geographical region, recreational activities, working areas, etc. \citep{dai2021say} partitioned the dataset into two distinct subsets, Pokec\_n and Pokec\_z, based on the  provinces of the individuals. Each of the two datasets contains two predominant regions within the relevant provinces. In this experiment, we utilize the Pokec\_n dataset and regard the geographical region as the sensitive attribute. In the node classification task, we use  working areas as the target variable for node prediction. 

    \item The NBA dataset, introduced by \cite{dai2021say}, consists of data from 403 professional basketball players in the National Basketball Association (NBA). The dataset includes features such as age, nationality, salary, and other relevant player attributes. In our experiments, nationality is considered a sensitive attribute, while the target label focuses on determining whether a player's salary is above or below the median.

    \item 
    The Credit dataset is introduced by \cite{yeh2009comparisons}, which offers valuable insights into various aspects of consumer behavior. The dataset includes features such as spending habits and credit history, which are essential for understanding the financial patterns of these individuals. The primary objective of this dataset is to facilitate the prediction of credit card default, with age being identified as the sensitive attribute.

\end{itemize}

\begin{table}[t]
\centering
\vspace{2mm}
\begin{tabular}{@{}lccc@{}}
\toprule
Count & Pokec-n & Credit & NBA\\ \midrule
Number of Nodes & 66,569 & 30,000 & 403    \\ \midrule
Number of node attributes & 59  & 13  & 39\\\midrule
Number of Edges & 729,129 & 304,754 & 16,570   \\ \midrule
Senstive attibute & region & age & nationality\\
 \bottomrule
\end{tabular}
\vspace{2mm}
\caption{Basic statistics of datasets.}
\label{tab:stat3}
\end{table}

\textbf{Baselines.}We compare our methods with other state-of-art models in the node classification task.
\begin{itemize}
    \item \textbf{FairGNN}: FairGNN  is a graph neural network (GNN) model introduced by \cite{dai2021say} employs adversarial learning address the challenges of group fairness in graph representation learning. 

    \item \textbf{GUIDE}: GUIDE was proposed by \cite{song2022guide} to ensure group equality informed individual fairness in graph representation learning.

    \item  \textbf{PRF}: Pairwise Fair Representation (PFR) is a graph learning method introduced by \cite{lahoti2019operationalizing} to achieve individual fairness in graph representation learning.
    
    \item \textbf{InFoRM}: Individual
Fairness on Graph Mining (InFoRM) \citep{kang2020inform} achieves individual fairness in graph representation learning by employing Lipschitz continuity.

\item \textbf{NIFTY}: \cite{agarwal2021towards} propose NIFTY (unifying fairness and stability) that applies the Lipschitz condition to achieve counterfactual fairness in graph learning. 

\end{itemize}
\subsection{Experiment Settings}
In this experiment, we compare our method to the state of art models for fairness in graph learning. Here we use Graph Convolutional Network (GCN) \citep{kipf2016semi} and Graph Attention Network (GAT) \citep{wang2019kgat} as the vanilla comparison models since they do not apply fairness skills. We also include graph learning models with group fairness like FairGNN \citep{dai2021say}. Graph learning models with individual fairness include GUIDE \cite{song2022guide}, PRF \citep{lahoti2019operationalizing} and InFoRM \citep{kang2020inform}. Graph learning models with counterfactual fairness such as NIFTY \citep{agarwal2021towards}. The parameters of our method are shown in Table \ref{tab:par3}.
We apply four datasets in the experiments. For each dataset we randomly divide them into training set, test set, and validation set with ratios of 50\%, 25\% and 25\%.

\begin{table}[t]
\centering
\vspace{2mm}
\begin{tabular}{@{}lcccc@{}}
\toprule
Count & Pokec-n & Credit & NBA  \\ \midrule
$\alpha$, coefficient of $L_{Ifg}$ & 1e-9 & 0.5 & 1e-9  \\ \midrule
$\beta$, coefficient of $L_A$ & 0.02 & 0.8 & 0.01 \\ \midrule
$\gamma$   & 0.004 &0.004 & 0.004 \\ \midrule
$\lambda_1$   & 0.5 & 0.5 & 0.5 \\ \midrule
$\lambda_2$   & 1.25 & 1.25& 1 \\ \midrule
$\eta$, coefficient of $L_{Cov}$  & 3 &6 & 16\\\midrule
number of sensitive labels& 200 & 500& 50 \\\midrule
learning rate & 0.0005 & 0.001& 0.001 \\\midrule
weight decay & 1e-5 & 1e-5 & 1e-5 \\

 \bottomrule
\end{tabular}
\vspace{2mm}
\caption{Hyper parameter setting for datasets.}
\label{tab:par3}
\end{table}

\begin{figure}[t]
\vspace{0mm}
\minipage{0.32\textwidth}
  \includegraphics[width=\linewidth]{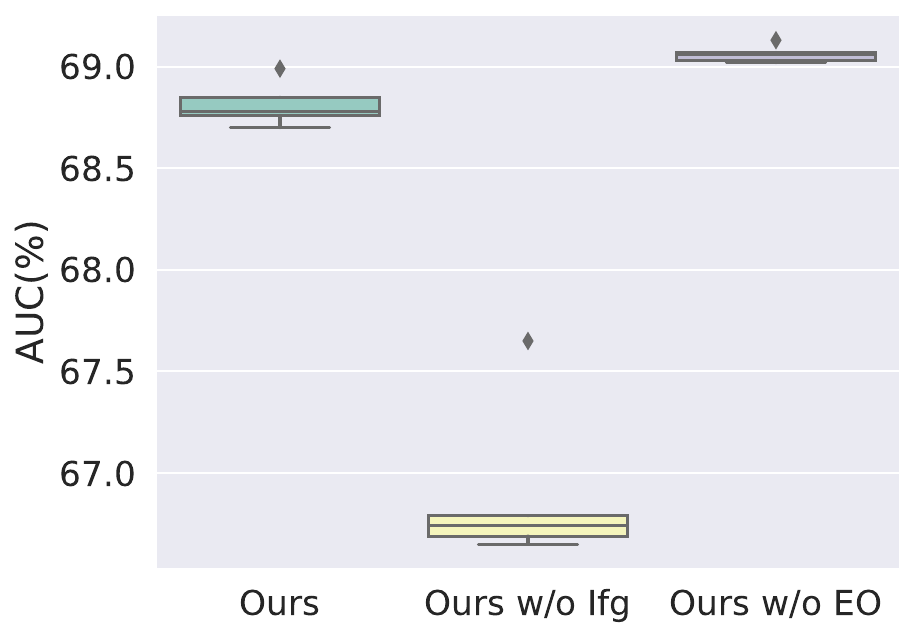}
\endminipage\hfill
\minipage{0.32\textwidth}
  \includegraphics[width=\linewidth]{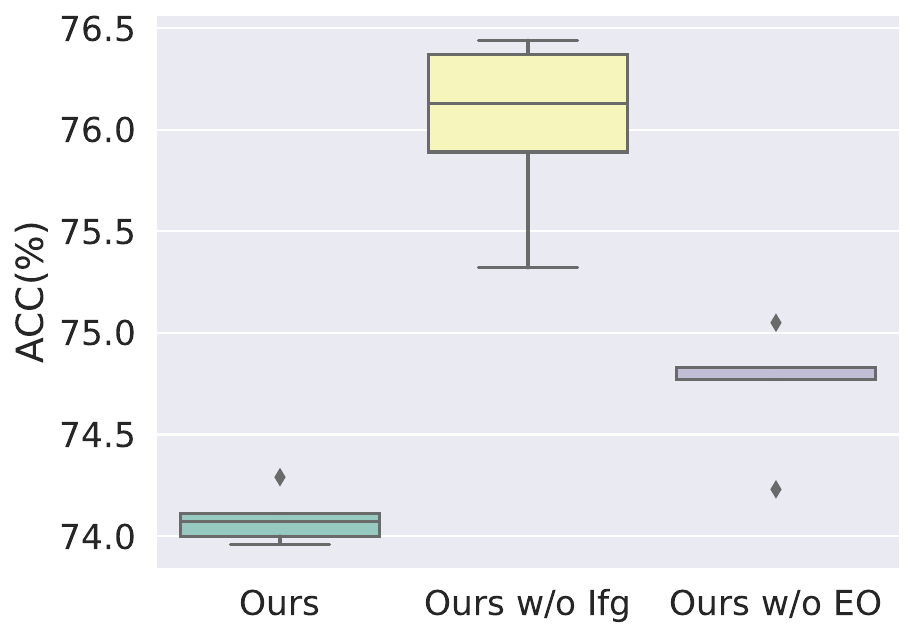}
\endminipage\hfill
\minipage{0.32\textwidth}%
  \includegraphics[width=\linewidth]{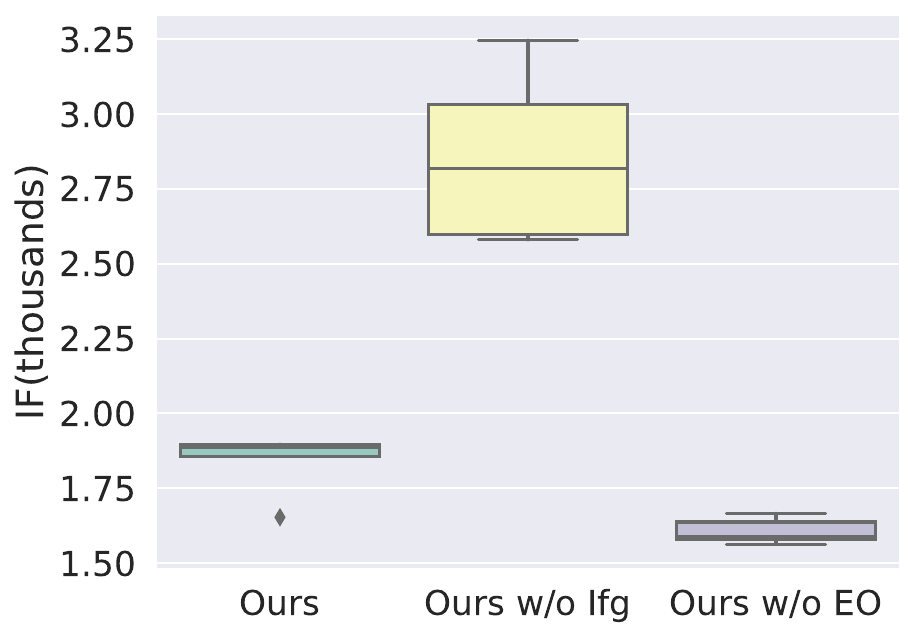}
\endminipage
 \caption{Comparison of our method, our method without loss function of individual fairness within groups, our method without the optimization for EO.}
 \label{ab_app}
 \vspace{-4mm}
\end{figure}
\subsection{Additional Analysis on  Ablation Studies}

Figure \ref{ab_app} presents performance comparisons based on AUC, ACC, and IF. We note that eliminating individual fairness losses within groups results in a marginal increase in ACC compared to our approach. When we exclude EO optimization losses, both ACC and AUC exhibit a non-significant increase, demonstrating that our method can maintain comparable accuracy.

Furthermore, We can observe that if we remove the loss of individual fairness within groups, the performance of IF becomes worse. This observation demonstrates the effectiveness of the loss function of individual fairness within groups, i.e., $L_{Ifg}$.

\end{document}